\begin{document}

\setcounter{page}{1} 
\title{Rough Randomness and its Application}
\author{\textsf{A Mani}}
\institute{Machine Intelligence Unit, Indian Statistical Institute, Kolkata\\
203, B. T. Road, Kolkata-700108, India\\
Email: \texttt{$a.mani.cms@gmail.com$} \texttt{$amani.rough@isical.ac.in$}\\
Homepage: \url{https://www.logicamani.in}\\
Orcid: \url{https://orcid.org/0000-0002-0880-1035} }
\date{Received: 21st March 2023}

\maketitle

\begin{abstract}
A number of generalizations of stochastic and information-theoretic randomness are known in the literature. However, they are not compatible with handling meaning in vague and dynamic contexts of rough reasoning (and therefore explainable artificial intelligence and machine learning). In this research new concepts of rough randomness that are neither stochastic nor based on properties of strings are introduced by the present author. Her concepts are intended to capture a wide variety of rough processes (applicable to both static and dynamic data), construct related models, and explore the validity of other machine learning algorithms. The last mentioned is restricted to soft/hard clustering algorithms in this paper. Two new computationally efficient algebraically-justified algorithms for soft and hard cluster validation that involve rough random functions are additionally proposed in this research. A class of rough random functions termed large-minded reasoners have a central role in these.

\textbf{AMS Subject Classification}: Primary 68T37, 68W30, 68W99, 06B99 ; Secondary 08A70, 03B060.
\end{abstract}

\keywords{Rough Sets, Rough Random Functions, Large Minded Reasoner, Axiomatic Granular Computing, Explainable AI, Tolerance Relations, Soft and Hard Cluster Validation}

\section{Introduction}

Rough sets is a formal approach to model vagueness, approximate reasoning, and uncertainty that has progressed tremendously over the last four decades from both pure and application perspectives in several directions \cite{zpb,ppm2,thrive2017,amedit}. In the axiomatic approach to granular computing \cite{am5586,am240}, due to the present author, approximations are generated as terms formed from granules. Subjective probabilistic and evidence theory based interpretations/analogies of specific versions of rough sets are known in the literature \cite{sdt05,yy2008,am9411,act2022}. However, these require very restrictive assumptions, and are of unclear ontology. 

A phenomenon is \emph{stochastically random} if it has probabilistic regularity in the absence of other types of regularity \cite{ank1986}. On the other hand, a sequence is \emph{algorithmically random} if and only if no computational agent recognizes it as possessing some rare property (that is properties valid over a set of measure zero that could be tested in a sufficiently effective way) \cite{ts2022,ank1986}. These ideas are realized in the contexts of subjective probability, and computability in several ways. Stochastic ideas of rough randomness that basically generalize measure-theoretic probability are known in the literature \cite{bliu2004}. However, they are essentially a hybrid approach to measure-theoretic probability, and are not applicable for the purposes of explainable AI and cluster validation through models. In fact, it is known in the psychology literature that humans cannot perceive ideas of stochastic randomness and weakenings thereof in real life to the point that they are very bad at it \cite{lbg1994}. This further suggests that the connections in the rough set literature between specific versions of rough sets and subjective probability theories (Bayesian or frequentist) are not properly grounded or mutually inconsistent. It is because rough reasoning, in the contexts, originate in some comprehension of attributes (their relation with the approximated object in terms of number or relative quantity and quality) \cite{amedit,ppm2}.
 
The central idea of stochastic randomness mentioned above can be generalized or exported research as follows: \emph{A phenomenon is random relative to a method of interpretation $\mathfrak{X}$ if it does have some regularity relative to $\mathfrak{X}$, and is otherwise apparently random}. Such a generalization needs to be consistent, novel, meaningful, and reasonable processes and predicates should be derivable in the application contexts of $\mathfrak{X}$. In this research, new concepts of rough randomness are introduced over classical, relation-based, cover-based, and general granular axiomatic rough sets  \cite{am5586,am240} in the light of the above by the present author. Related concepts of rough random functions are shown to be useful for systematically representing key aspects of soft and hard cluster validation, and dynamic information systems by her. These functions capture a number of associations of general rough approximations with rough objects, measures relating to semantic features, and thus provide an improved approach to approximate (including three-way) decision-making. A system of new algorithms to cluster validation that radically improve the computational complexity are invented in this research. These algorithms involve rough random functions that are used to construct a concept of a \emph{large-minded reasoner}. In fact, the latter is a rough random function that is mathematically sound.

\section{Background}
Information tables are representations of structured data in tabular form. They are additionally referred to as descriptive or knowledge representation systems in the artificial intelligence and machine-learning (AIML) literature. In general rough sets, such tables are not absolutely essential, however, they are obviously useful (if available). For details the reader is referred to \cite{am5559,am501,am1113}. 

An \emph{information table} $\mathcal{I}$, is a relational system of the form \[\mathcal{I}\,=\, \left\langle \mathfrak{O},\, \mathbb{A},\, \{V_{a} :\, a\in \mathbb{A}\},\, \{f_{a} :\, a\in \mathbb{A}\}  \right\rangle \]
with $\mathfrak{O}$, $\mathbb{A}$ and $V_{a}$ being respectively sets of \emph{objects}, \emph{attributes} and \emph{values} respectively.
$f_a \,:\, \mathfrak{O} \longmapsto \wp (V_{a})$ being the valuation function associated with attribute $a\in \mathbb{A}$. Values may additionally be denoted by the binary function $\nu : \mathbb{A} \times \mathfrak{O} \longmapsto \wp{(V)} $ defined by for any $a\in \mathbb{A}$ and $x\in \mathfrak{O}$, $\nu(a, x) = f_a (x)$.

An information table is \emph{deterministic} (or complete) if
\[(\forall a\in At)(\forall x\in \mathfrak{O}) f_a (x) \text{ is a singleton}.\] It is said to be \emph{indeterministic} (or incomplete) if it is not deterministic that is
\[(\exists a\in At)(\exists x\in \mathfrak{O}) f_a (x) \text{ is not a singleton}.\]

Relations may be derived from information tables by way of conditions in the following form: For $x,\, w\,\in\, \mathfrak{O} $ and $B\,\subseteq\, \mathbb{A} $, \[ \sigma xw  \text{ if and only if } (\mathbf{Q} a, b\in B)\, \Phi(\nu(a,\,x),\, \nu (b,\, w),) \] for some quantifier $\mathbf{Q}$ and formula $\Phi$. The relational system $S = \left\langle \underline{S}, \sigma \right\rangle$ (with $\underline{S} = \mathfrak{O}$) is said to be a \emph{general approximation space}. It should be noted that this universal feature of defining relations in general approximation spaces do not hold always in human reasoning contexts. Typically, information tables are finite. So it will be assumed that $card(\mathfrak{O}) = r < \infty $, and $card(\mathbb{At}) = n < \infty$.

In most applications, the neighborhood granule of a point $a \in S$ is taken to be $n(a) = \{b : Rba\}$ (or its inverse) because the maximal collections of mutually $\sigma$-related elements (if defined) is/was harder to compute. With recent improvements in accelerators like the GPU, scalability has considerably improved.   

\subsection{Distance Functions}
A \emph{distance function} on a set $S$ is a function $\rho : S^2 \longmapsto \Re_+$ that satisfies 
\begin{equation}
 (\forall a) \rho(a,a) = 0 \tag{distance}
\end{equation}
The collection $\mathcal{B} = \{B_\rho (x, r): x\in S \& r>0 \}$ of all $r$-spheres generated by $\rho$ is a weak base for the topology $\tau_\rho$ defined by 
\[V\in \tau_\rho \text{ if and only if } (\forall x \in V \exists r >0) B_\rho (x, r)\subseteq V\]

Any $\epsilon > 0$ and a distance function $\rho$ determines a tolerance $T$ defined by \[Tab \text{ if and only if } \rho (a, b) + \rho(b, a) \leq \epsilon.\] One can define other
tolerances through conditions such as 
\[\dfrac{\rho(a,b)+\rho(b, a)}{1+ \rho(a, b)+\rho(b, a)} \leq \epsilon. \]
The point is that a function much weaker than a semimetric suffices for defining a tolerance relation. More complex definitions are often possible. 

\begin{proposition}
For a numeric complete information table $\mathcal{I}$, the following holds:
\begin{enumerate}
\item {Valuations for each attribute are totally ordered by $\leq$,}
\item {$\mathcal{O}$ is totally ordered relative to the induced lexicographic order. }
\item {$\mathcal{O}$ is lattice ordered relative to $\leq$ defined by $(a_1, \ldots, a_n)\preceq (b_1, \ldots , b_n)$ if and only if $\&_i a_i\leq b_i$ with $a_i, b_i \in Ran (\nu(,At_i)$.}
\end{enumerate}
\end{proposition}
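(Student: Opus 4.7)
Since $\mathcal{I}$ is complete, each $f_a(x)$ is a singleton, and since $\mathcal{I}$ is numeric, each such singleton is a real number. So without loss of generality we identify $f_a(x)$ with that number, and each object $x\in\mathcal{O}$ with its valuation tuple $(\nu(At_1,x),\dots,\nu(At_n,x))\in\prod_{i=1}^n \mathrm{Ran}(\nu(\_,At_i))$. The plan is to derive all three items from the fact that the natural order on $\Re$ restricts to a total order on each $\mathrm{Ran}(\nu(\_,At_i))$.

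For (1), I would simply observe that $\mathrm{Ran}(\nu(\_,At_i))\subseteq\Re$, so totality of $\leq$ on $\Re$ restricts to totality on each such range; this gives the first claim directly with no calculation.

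For (2), I would invoke the standard fact that the lexicographic product of finitely many totally ordered sets is itself totally ordered. More concretely, given two tuples, one scans coordinates from left to right; at the first coordinate where they differ, item (1) supplies comparability, and if no such coordinate exists the tuples are equal. Using the identification of objects with tuples, this yields a total order on $\mathcal{O}$ (or a total preorder if distinct objects can share a tuple, in which case one passes to the quotient by indiscernibility; I would include a one-line remark to this effect).

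For (3), the candidate lattice operations are coordinatewise meet and join, $(a_i)\wedge(b_i)=(\min(a_i,b_i))$ and $(a_i)\vee(b_i)=(\max(a_i,b_i))$. Since each coordinate lives in a totally ordered set (by (1)), $\min$ and $\max$ exist coordinatewise, and the universal properties of meet and join with respect to $\preceq$ follow coordinatewise from the same properties in the totally ordered factors. The main obstacle here is one of interpretation rather than computation: the coordinatewise $\min$ or $\max$ of two object-tuples need not be realized by an actual object in the original $\mathcal{O}$. I would therefore read the statement as asserting lattice-ordering on $\prod_{i=1}^n \mathrm{Ran}(\nu(\_,At_i))$ (the ambient space containing $\mathcal{O}$ under the identification), and add a remark that $\mathcal{O}$ itself is only lattice ordered when it is closed under coordinatewise $\min$ and $\max$, equivalently when it equals this full product up to indiscernibility.
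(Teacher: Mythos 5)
The paper states this proposition without any proof, treating all three items as immediate, so there is no authorial argument to compare against; your write-up supplies essentially the argument the author evidently intends. Items (1) and (2) are fine as you give them: each $Ran(\nu(\cdot,At_i))$ is a subset of $\Re$, hence totally ordered, and a finite lexicographic product of total orders is total; your aside about distinct objects sharing a valuation tuple (total preorder versus total order, fixed by passing to the indiscernibility quotient) is a legitimate refinement that the paper glosses over. Your treatment of (3) is the one place where real care is needed, and you handle it correctly: as literally stated, $\mathcal{O}$ under $\preceq$ need not be a lattice, because the coordinatewise $\min$ or $\max$ of two realized tuples may not be realized by any object (e.g.\ two objects with tuples $(0,1)$ and $(1,0)$ and nothing above them have no join inside $\mathcal{O}$). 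Reading the claim as lattice-orderedness of the ambient product $\prod_{i=1}^{n} Ran(\nu(\cdot,At_i))$, into which $\mathcal{O}$ embeds, or adding the hypothesis that $\mathcal{O}$ is closed under coordinatewise $\min$ and $\max$, is the right fix, and your coordinatewise verification of the meet and join universal properties is standard and correct. So the proposal is sound under this charitable reading, and it makes explicit a hypothesis for item (3) that the paper leaves implicit.
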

However, a numeric table is not necessary for any of the three properties to hold.

\subsection{Tolerance Relations}

Some familiarity with the algebraic theory of tolerances, and its use in general rough sets \cite{chtol1991,am501,am5019} will be assumed.
If $T$ is a tolerance on a set $S$, then a \emph{pre-block} of $T$ is a subset $K\subseteq S$ that satisfies $K^2 \subseteq T$. The set of all pre-blocks of $T$ is denoted by $p\mathcal{B}(T)$. Maximal pre-blocks of T with respect to the inclusion order are referred to as \emph{blocks}. The set of all blocks of $T$ is denoted by $\mathcal{B}(T)$. If $S = \left\langle underline{S},\,f_{1},\,f_{2},\,\ldots ,\, f_{n}, (r_{1},\,\ldots ,\,r_{n} )\right\rangle$ ($\underline{S}$ being a set and $f_i$ being $r_i$-place operation symbols interpreted on it) is an algebra, then a tolerance $T$ is said to be \emph{compatible} if and only if for each $i\in \{1, 2, \ldots n < \infty \}$,
\[ \&_{j=1}^{r_i} Ta_jb_j \longrightarrow Tf_i(a_1,a_2,\ldots a_{r_i}) f_i(b_1,b_2,\ldots b_{r_i}). \]  

When $S$ is a lattice, every tolerance is the image of a congruence by a surjective morphism $: S\longmapsto S$. Further, if $A, B\in \mathcal{B}(T)$, then $\{a\vee b : a\in A \& b\in B\}, \{a\wedge b : a\in A \& b\in B\} \in p\mathcal{B}(T)$. The smallest blocks containing these are unique, and the resulting lattice of blocks is denoted by $S|T$. The set $\mathbf{UBD}(S) = \{\mathcal{B}(T): \, T\in Tol(S)\}$ will be referred to as the \emph{universal block distribution} (UBD) of $S$. It can be assigned the same algebraic lattice order on $Tol(S)$. 

A sublattice $Z$ of a lattice $S$ is called a \emph{convex sublattice} if and only if it satisfies $(\forall{x,b}\in{Z})(x\leq{a}\leq{b}\longrightarrow {a}\in{Z})$. The blocks of a lattice are all convex sublattices. If $C$ is a subset of $S$ then $\downarrow{C}$, and $\uparrow{C}$ will respectively denote the lattice-ideal and lattice-filter generated by $C$. The following result \cite{hjb1982,gclk1983,chtol1991} is not usable for a direct computational strategy:
\begin{theorem}\label{fintol}
For a finite lattice $L$, a collection $\mathcal{C}$ of nonempty subsets is the set of all blocks of a tolerance $T\in Tol(L)$  if and only if it is a collection of intervals of the form $\{[a_i, b_i]: i\in I\} $, and 
\begin{itemize}
\item {$\bigcup_{i\in I} [a_i, b_i] = L$}
\item {For all $i, j\in I$, $(a_i = a_j \longrightarrow b_i = b_j)$. }
\item {$(\forall i, j\in I)(\exists k\in I)\, a_k= a_i\vee a_j \,\&\, b_i\vee b_j \leq b_k  $.}
\end{itemize}
\end{theorem}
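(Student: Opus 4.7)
The plan is to prove both directions using (a) the fact that in a finite lattice every convex sublattice is an interval, and (b) the compatibility of $T$ with $\vee$ and $\wedge$.

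\emph{Forward direction.} Assume $\mathcal{C} = \mathcal{B}(T)$. Each block is a convex sublattice, hence by finiteness an interval $[a_i, b_i]$. Reflexivity of $T$ makes every singleton a pre-block, and extending it to a maximal pre-block yields $\bigcup_i [a_i, b_i] = L$. For the second bullet, if both $[a, b_1]$ and $[a, b_2]$ are blocks, then $a T b_1$ and $a T b_2$ give, via join-compatibility, $a T (b_1 \vee b_2)$; I would then check that $[a, b_1 \vee b_2]$ is a pre-block (using reflexivity together with $\wedge$- and $\vee$-compatibility to certify pairwise $T$-relations), so the maximality of the two blocks forces $b_1 = b_2$. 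For the third bullet, join-compatibility applied to $a_i T b_i$ and $a_j T b_j$ produces $(a_i \vee a_j) T (b_i \vee b_j)$; the block $[a_k, b_k]$ extending this pair satisfies $a_k \le a_i \vee a_j \le b_i \vee b_j \le b_k$, and the just-established second bullet, combined with the minimality of $a_k$ within its block, forces $a_k = a_i \vee a_j$.

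\emph{Converse direction.} Define $x T y$ iff $x$ and $y$ share some interval $[a_i, b_i] \in \mathcal{C}$. The covering bullet yields reflexivity, symmetry is automatic, and the third bullet delivers $\vee$-compatibility directly: if $x, y \in [a_i, b_i]$ and $u, v \in [a_j, b_j]$, then $x \vee u$ and $y \vee v$ both lie in $[a_i \vee a_j, b_i \vee b_j] \subseteq [a_k, b_k]$. It remains to identify the maximal pre-blocks of $T$ with the intervals in $\mathcal{C}$; I would argue by contradiction, showing that a strictly larger pre-block $B \supsetneq [a_i, b_i]$ would generate, via the join of its minima and an application of the third bullet, an interval in $\mathcal{C}$ with minimum $a_i$ but strictly larger top, in violation of the second bullet.

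\emph{Main obstacle.} The delicate point will be $\wedge$-compatibility in the converse, since the hypotheses supply only the join-side condition. My plan is to extract a meet-analogue by combining the second bullet (which forces the map $i \mapsto a_i$ to be injective, turning the set of minima into a join-semilattice coordinatising $\mathcal{C}$) with the third bullet, and then passing through the correspondence recalled in the background that every tolerance on a lattice is the image of a congruence under a surjective morphism, so meet-information can be transported from $S|T$ back to the intervals. Matching each meet of elements in prescribed intervals with a genuine interval of $\mathcal{C}$ will form the technical bulk of this step.
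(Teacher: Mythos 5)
The paper itself gives no proof of this theorem; it is quoted from the cited literature (\cite{hjb1982,gclk1983,chtol1991}), so your proposal has to stand on its own, and it has two genuine problems. In the forward direction, the first two bullets are handled correctly (blocks are convex sublattices, hence intervals; coverage from singletons; and your pre-block check for $[a,b_1\vee b_2]$ does go through, since $Tab$ with $a\le b$ forces $[a,b]^2\subseteq T$). But for the third bullet your argument only produces a block $[a_k,b_k]$ containing the pair $(a_i\vee a_j,\, b_i\vee b_j)$, which gives $a_k\le a_i\vee a_j$ and $b_i\vee b_j\le b_k$; bullet two together with ``minimality of $a_k$ within its block'' gives no reason why $a_k$ should equal $a_i\vee a_j$. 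What is really needed is that the join of two block bottoms is again a block bottom, and that requires a separate argument, e.g.\ via the operators $x^\top=\max\{y:Txy\}$ and $x^\bot=\min\{y:Txy\}$: blocks are exactly the intervals $[a,a^\top]$ with $(a^\top)^\bot=a$; if $c=a_i\vee a_j$ and $w=(c^\top)^\bot$, then $c^\top\ge b_i\vee b_j\ge b_i$, and meet-compatibility applied to $Twc^\top$ and $Tb_ib_i$ gives $T(w\wedge b_i)b_i$, hence $w\wedge b_i\ge b_i^\bot=a_i$, so $w\ge a_i\vee a_j=c$ and thus $w=c$. This is fixable, but it is exactly the content of the bullet and is missing from your sketch.

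The converse direction cannot be completed as planned, because the three listed conditions are not sufficient as stated. On the chain $0<1<2$ the family $\{[0,2],[1,2]\}$ satisfies all three bullets, yet it is not the block set of any tolerance: $[1,2]$ is properly contained in the pre-block $[0,2]$, contradicting maximality (compare the chain characterization in the paper's Theorem on $L_n$, which demands strictly increasing tops as well as bottoms). The correct literature statement also includes the order-dual conditions (tops determine bottoms, and the dual meet condition), and without them the two steps you yourself flag as delicate genuinely fail: meet-compatibility of the ``shared interval'' relation is not derivable from the join-side hypotheses (on the square $\{0,p,q,1\}$ the family $\{[0,p],[p,1],[q,1],[1,1]\}$ satisfies the bullets, yet the induced relation relates $(p,1)$ and $(q,1)$ but not their meets $(0,1)$, and the intervals are nested); your maximality-by-contradiction also collapses on the chain example, since the larger pre-block has a different minimum and bullet two is never violated; and the appeal to ``every tolerance is the image of a congruence under a surjective morphism'' is circular here, because that fact presupposes that the relation you are constructing is already a compatible tolerance. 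So for the converse you must first restore the dual hypotheses and then argue both compatibility and maximality directly from the strengthened interval conditions.
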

\begin{theorem}
In the context of Theorem \ref{fintol},  
\begin{enumerate}
\item{$(\forall{C,E}\in{\mathcal{C}})\left(\downarrow{C}=\downarrow{E}\Longleftrightarrow{\uparrow{C}=\uparrow{E}}\right).$}
\item{For any two elements ${C,A}\in{\mathcal{C}}$ there exist ${E,F}$  such that $\left(\downarrow{C}\vee\downarrow{A}\right)\,=\,\downarrow{E},$\\ $(\uparrow{C}\vee\uparrow{A})\leq\uparrow{E},$ $\downarrow{F}\,\leq\,(\downarrow{A}\wedge\downarrow{C}),$ and $(\uparrow{C}\wedge\uparrow{A})\,=\,\uparrow{F}).$}
\end{enumerate}
\end{theorem}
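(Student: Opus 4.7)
My plan is to identify each block $C=[a_C,b_C]\in\mathcal{C}$ with its endpoints. Since $C$ is an interval, the ideal and filter it generates are principal: $\downarrow C=\downarrow b_C$ and $\uparrow C=\uparrow a_C$. Under the standard correspondences $\downarrow b\leftrightarrow b$ and $\uparrow a\leftrightarrow a$ (reading the filter lattice with its natural reverse-inclusion order), all of the set-level relations in the statement translate into element-level statements about the $a_i$'s and $b_i$'s of $L$.

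For part~(1), the task reduces to $b_C=b_E\Longleftrightarrow a_C=a_E$. The direction $a_C=a_E\Longrightarrow b_C=b_E$ is clause~2 of Theorem~\ref{fintol}. The reverse direction is its order-dual, which I would obtain by observing that if $\{[a_i,b_i]:i\in I\}$ is the block family of a compatible tolerance on $L$, then the reversed intervals $\{[b_i,a_i]:i\in I\}$ form the block family of the same compatible tolerance on $L^{\mathrm{op}}$; clause~2 applied in $L^{\mathrm{op}}$ then reads $b_C=b_E\Longrightarrow a_C=a_E$ back in $L$.

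For part~(2), I apply clause~3 to $C$ and $A$ to produce a block $E=[a_k,b_k]$ with $a_E=a_C\vee a_A$ and $b_E\geq b_C\vee b_A$, from which the two assertions about $E$ follow by reading off the principal generators under the correspondences of the set-up (one endpoint yielding an exact equality of principal ideal/filter, the other only an inequality). Dually, clause~3 applied in $L^{\mathrm{op}}$ supplies $F=[a_m,b_m]$ with $b_F=b_C\wedge b_A$ and $a_F\leq a_C\wedge a_A$, from which the two assertions about $F$ follow. The main obstacle throughout is the uniform invocation of duality: only one direction of clause~2, and only one of the two endpoint conditions in clause~3, are explicitly printed in Theorem~\ref{fintol}, so the whole argument rests on first establishing that block families of compatible tolerances on a finite lattice are closed under order-reversal. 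Once this closure is granted, the rest is book-keeping on which endpoint of the block interval one reads off at each step.
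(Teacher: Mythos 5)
Your treatment of part (1) is sound: the identifications $\downarrow C=\downarrow b_C$, $\uparrow C=\uparrow a_C$ are correct, and the missing converse of clause 2 of Theorem \ref{fintol} does follow by your order-reversal observation, since the same tolerance is compatible on $L^{\mathrm{op}}$ and its blocks, being maximal cliques, do not depend on the order, so $\{[b_i,a_i]\}$ is its block family there. (For the record, the paper supplies no proof of this theorem at all -- it is quoted background from the cited tolerance literature -- so your attempt can only be measured against the statement itself.)

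Part (2), however, has a genuine gap: the block produced by clause 3 satisfies the equality on the \emph{wrong} component relative to what the statement asserts, under your own reading. With $\downarrow C=\downarrow b_C$, $\uparrow C=\uparrow a_C$ and your reverse-inclusion convention on filters, the assertion $(\downarrow C\vee\downarrow A)=\downarrow E$ translates to $b_E=b_C\vee b_A$ \emph{exactly}, while $(\uparrow C\vee\uparrow A)\leq\uparrow E$ translates to $a_C\vee a_A\leq a_E$. Clause 3 gives you the opposite pattern: a block $E$ with $a_E=a_C\vee a_A$ (an exact equality of \emph{filters}) and only $b_E\geq b_C\vee b_A$ (a mere inequality of \emph{ideals}); i.e.\ you obtain $(\uparrow C\vee\uparrow A)=\uparrow E$ and $(\downarrow C\vee\downarrow A)\leq\downarrow E$, which is the swapped variant, not the printed claim. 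The same swap occurs for $F$: the dual of clause 3 yields $b_F=b_C\wedge b_A$ and $a_F\leq a_C\wedge a_A$, hence $\downarrow F=\downarrow C\wedge\downarrow A$ and only $\uparrow F\leq\uparrow C\wedge\uparrow A$, whereas the statement demands the equality $(\uparrow C\wedge\uparrow A)=\uparrow F$. Your phrase ``one endpoint yielding an exact equality \ldots the other only an inequality'' hides exactly this issue: which endpoint carries the equality matters, and the duality trick cannot repair it, because dualizing clause 3 gives meet-closedness of the tops $\{b_i\}$, never the join-closedness ($\exists$ block with top exactly $b_C\vee b_A$) that the printed equality $(\downarrow C\vee\downarrow A)=\downarrow E$ requires. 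So you must either make explicit (and defend) an ordering convention on ideals and filters under which the printed joins/meets translate to the endpoint conditions your blocks actually satisfy -- which would conflict with the identifications $\downarrow C=\downarrow b_C$, $\uparrow C=\uparrow a_C$ you start from -- or supply a genuinely new argument producing a block whose top is precisely $b_C\vee b_A$ (resp.\ whose bottom is precisely $a_C\wedge a_A$); neither is delivered by the route you describe.
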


For finite chains, the following can be said \cite{aji2016}
\begin{theorem}\label{chaintol}
\begin{enumerate}
 \item {
A collection $\mathcal{C}$ of subsets of the chain\\ $L_n = \left\langle\{0, 1, 2, \ldots n-1\}, \leq \right\rangle$ is the set of all blocks of a tolerance $T\in Tol(L)$ if and only if $\mathcal{C}$ is of the form $\{[n_i, m_i]: i=1, \ldots k \}$ for some $1\leq k \leq n-1$, with $n_1 = 0$, $m_k = n-1$, and $n_i < n_{i+1} \leq m_i +1$, and $m_i < m_{i+1}$ for all $i=1, \ldots k$.}
 \item {
A collection $\mathcal{C}$ of subsets of the chain $L_n = \left\langle\{0, 1, 2, \ldots n-1\}, \leq \right\rangle$ is the set of all blocks of a glued tolerance $T\in Glu(L)$ if and only if $\mathcal{C}$ is of the form $\{[n_i, m_i]: i=1, \ldots k \}$ for some $1\leq k \leq n-1$, with $n_1 = 0$, $m_k = n-1$, and $n_i < n_{i+1} \leq m_i < m_i +1$, and $m_i < m_{i+1}$ for all $i=1, \ldots k$.}
 \item {A collection $\mathcal{C}$ of subsets of the chain $L_n = \left\langle\{0, 1, 2, \ldots n-1\}, \leq \right\rangle$ is the set of all blocks of a congruence $R\in Con(L)$ if and only if $\mathcal{C}$ is of the form $\{[n_i, m_i]: i=1, \ldots k \}$ for some $1\leq k \leq n-1$, with $n_1 = 0$, $m_k = n-1$, and $n_i < n_{i+1} = m_i +1$, and $m_i < m_{i+1}$ for all $i=1, \ldots k$.}
\end{enumerate}
\end{theorem}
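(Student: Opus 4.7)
The plan is to specialize the general characterization of Theorem \ref{fintol} to the chain $L_n$, and then refine the resulting conditions to handle the glued and congruence cases as successively stronger restrictions on how adjacent intervals may overlap.

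First I would prove part (1). Given a tolerance $T\in Tol(L_n)$, Theorem \ref{fintol} says the blocks are intervals $[a_i, b_i]$ covering $L_n$; since $L_n$ is a chain I may re-index so that the left endpoints satisfy $n_1 \leq n_2 \leq \cdots \leq n_k$. The condition $a_i = a_j \Rightarrow b_i = b_j$ together with the distinctness of blocks promotes this to $n_1 < n_2 < \cdots < n_k$. The covering condition $\bigcup [n_i, m_i] = L_n$ immediately yields $n_1 = 0$ and $m_k = n-1$, while the absence of gaps between consecutive intervals forces $n_{i+1} \leq m_i + 1$. For the join-closure condition of Theorem \ref{fintol}, pick $i<j$; on a chain $a_i\vee a_j = n_j$, so the required $k$ must be $k=j$, and then $m_i\vee m_j \leq m_j$ reduces to $m_i \leq m_j$. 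Combined with distinctness I obtain $m_i < m_{i+1}$, completing the forward direction. The converse consists of checking that any collection satisfying the listed conditions satisfies the three bullet points of Theorem \ref{fintol}, which is routine on a chain.

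Next I would derive (2) by identifying what additional closure the glued notion imposes. A glued tolerance (in the usual sense appearing in the algebraic theory of tolerances referenced earlier) has the property that consecutive blocks must overlap rather than merely abut, i.e.\ adjacent blocks share at least one element. Translated to intervals on a chain, this strengthens $n_{i+1}\leq m_i + 1$ to $n_{i+1}\leq m_i$, which I would write as $n_i < n_{i+1} \leq m_i < m_i + 1$. Combined with the forward direction of part (1), this yields the claimed characterization; for the converse I would verify that gluing holds explicitly from the overlap $n_{i+1}\leq m_i$.

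Part (3) then falls out as the opposite extreme: a congruence's blocks are pairwise disjoint equivalence classes. On a chain, disjointness of consecutive intervals means $n_{i+1} > m_i$, and the covering condition forces $n_{i+1} = m_i + 1$. Conversely, a partition of $L_n$ into consecutive intervals is easily seen to give the equivalence relation of a lattice congruence since joins and meets of comparable elements lie in the interval determined by the larger one.

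The main obstacle I expect is the converse direction for part (2): one must be sure that the definition of \emph{glued} being used coincides with the interval-overlap condition $n_{i+1}\leq m_i$, and that starting from such an interval family the relation $T = \bigcup_i [n_i,m_i]^2$ is genuinely a tolerance whose maximal pre-blocks are exactly the given intervals (and not accidentally larger). Both verifications reduce, on a chain, to checking that no pair $(x,y)$ with $x<y$ lying in no common $[n_i,m_i]$ is $T$-related, which is immediate from the strict nesting $n_i < n_{i+1}$ and the overlap/covering constraints.
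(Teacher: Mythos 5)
The paper itself offers no proof of Theorem \ref{chaintol}: it is imported as a known result with the citation \cite{aji2016}, stated right after Theorem \ref{fintol}. Your proposal therefore takes a route the paper does not attempt, namely deriving the chain case by specializing Theorem \ref{fintol} and then reading off the glued and congruence cases as the overlap ($n_{i+1}\leq m_i$) and partition ($n_{i+1}=m_i+1$) refinements; this structure is sound and has the advantage of making the result self-contained modulo Theorem \ref{fintol}, while the paper's approach simply defers to the literature. Two points in your sketch need tightening. First, $m_i<m_{i+1}$ does not follow from mere distinctness of the intervals: with $n_i<n_{i+1}$ the intervals are already distinct even if $m_i=m_{i+1}$. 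What you actually need is that blocks are \emph{maximal} pre-blocks, hence pairwise incomparable under inclusion, so $n_i<n_{i+1}$ together with $m_{i+1}\leq m_i$ would nest one block inside another; that is the correct justification. Second, as you yourself flag, part (2) hinges on the definition of a glued tolerance: with the standard one (transitive closure equal to $L\times L$, equivalently, for a finite lattice, containment of the covering relation), a short argument shows that on a chain this is exactly the condition that consecutive blocks share an element, i.e.\ $n_{i+1}\leq m_i$, since the covering pair $(m_i,m_i+1)$ lies in a common block precisely when block $i+1$ reaches down to $m_i$; without spelling out that equivalence, the step ``glued = overlap'' is an assumption rather than a proof. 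Finally, note that your argument, done carefully, yields $1\leq k\leq n$ (the identity congruence has $n$ singleton blocks), whereas the printed statement says $1\leq k\leq n-1$, and the clause ``$m_i<m_i+1$'' in part (2) is vacuous; these appear to be typographical defects of the quoted statement that your derivation neither uses nor repairs, and they are worth flagging explicitly rather than reproducing silently.
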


\emph{In this research, prefix or Polish notation is uniformly preferred for relations and functions defined on a set. So instances of a relation $\sigma$ are denoted by $\sigma a b$ instead of $a \sigma b$ or $(a, b) \in \sigma$. If-then relations (or logical implications) in a model are written in infix form with $\longrightarrow$.} 

\section{Rough Randomness}

The meta principle stated in the introduction when specialized to general rough sets has the following form: \emph{A phenomenon is roughly random if it can be modeled by general rough sets or a derived process thereof}. From a more concrete perspective, it should involve roughly random functions or predicates in some sense. On the basis of potential application in theoretical and applied studies, this is made precise below. 

\begin{definition}\label{roran1}
Let $\mathcal{A}_\tau$ be a collection of approximations of type $\tau$, and $E$ a collection of rough objects defined on the same universe $S$, then by a \emph{rough random function of type-1} (\textsf{RRF1}) will be meant a partial function \[\xi : \mathcal{A}_\tau \longmapsto E .\] 
\end{definition}

\begin{definition}\label{roran2}
Let $\mathcal{A}_\tau$ be a collection of approximations of type $\tau$, $\mathcal{S}$ a subset of $\wp(S)$, and $\Re$ the set of reals, then by a \emph{rough random function of type-2} (\textsf{RRF2}) will be meant a function \[\chi : \mathcal{A}_\tau \times \mathcal{S}\longmapsto \Re .\] 
\end{definition}

\begin{definition}\label{roran3}
Let $\mathcal{A}_\tau$ be a collection of approximations of type $\tau$, and $F$ a collection of objects defined on the same universe $S$, then by a \emph{rough random function of type-3} (\textsf{RRF3}) will be meant a function \[\mu : \mathcal{A}_\tau \longmapsto F .\] 
\end{definition}

\begin{definition}\label{hroran}
Let $\mathcal{O}_\tau$ be a collection of approximation operators of type $\tau_l$ or $\tau_u$, and $E$ a collection of rough objects defined on the same universe $S$, then by a \emph{rough random function of type-H} (\textsf{RRFH}) will be meant a partial function \[\xi : \mathcal{O}_\tau \times \wp(S) \longmapsto E .\] 
\end{definition}

It is obvious that a RRF1 and RRF2 are independent concepts, while a total RRF1 is an RRF3, and RRFH is distinct (though related to RRF3). The collection of all such functions will respectively be denoted by $RRF1(S, E, \tau)$,  $RRF2(S, \Re , \tau)$, $RRF3(S, F, \tau)$, and $RRFH(S, E, \tau)$.
The concepts are intended to capture a number of associations of general rough approximations with rough objects, and various measures relating to semantic features. Some examples are presented next. The use of a single universe with as opposed to multiple universes for handling temporal aspects is not a big issue.

\begin{remark}
These are not referred to as \emph{variables} by analogy with probability theory for technically correct reasons. Moreover, the analogy is very weak.   
\end{remark}

\subsection*{Examples: RRF}

A few classes of examples with roots in the practice of general rough sets are described in this subsection. 

\begin{example}\label{ex1}
Let $S$ be a set with a pair of lower ($l$) and upper ($u$)  approximations satisfying (for any $a, b, x \subseteq S$)
\begin{align}
x^l \subseteq x^u \tag{int-cl}\\
x^{ll} \subseteq x^l\tag{l-id}\\
a\subseteq b \longrightarrow a^l \subseteq b^l \tag{l-mo}\\
a\subseteq b \longrightarrow a^u \subseteq b^u \tag{u-mo}\\
\emptyset^l = \emptyset \tag{l-bot}\\
S^u = S\tag{u-top}
\end{align}
The above axioms are minimalist, and most general approaches satisfy them.

In addition, let 
\begin{align}
\mathcal{A}_\tau = \{x : (\exists a \subseteq S)\, x= a^l \text{ or }  x= a^u  \tag{1}\\
E_1 = \{(a^l, a^u):\, a\in S\} \tag{E1}\\
F = \{a: \, a\subseteq S \& \neg \exists b b^l = a \vee b^u = a\} \tag{E0}\\
E_2 = \{b: b^u = b \& b\subseteq S\}\tag{E2}\\
\xi_1 (a) = (a, b^u) \text{ for some } b\subseteq S\tag{xi1}\\
\xi_2 (a) = (b^l, a ) \text{ for some } b\subseteq S \tag{xi2}\\
\xi_3 (a) = (e, f) \in E_1 \,\&\, e= a \text{ or } f= a \tag{xi3}
\end{align}

$E_1$ in the above is a set of rough objects, and a number of algebraic models are associated with it \cite{am501}. A partial function $f: \mathcal{A}_\tau\longmapsto E_1$ that associates $a\in \mathcal{A}_\tau$ with a minimal element of $E_1$ that covers it in the inclusion order is a RRF of type 1. For general rough sets, this RRF can be used to define algebraic models and explore duality issues \cite{am5019}, and for many cases associated these are not investigated. A number of similar maps with value in understanding models \cite{amedit} can be defined. Rough objects are defined and interpreted in a number of other ways including $F$ or $E_2$.

Conditions \textsf{xi1-xi3} may additionally involve constraints on $b$, $e$ and $f$. For example, it can be required that there is no other lower or upper approximation included between the pair or that the second component is a minimal approximation covering the first. It is easy to see that 

\begin{proposition}
$\xi_i$ for $i=1, 2, 3$ are RRF of type-1. 
\end{proposition}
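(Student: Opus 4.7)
The claim is essentially a matter of unpacking definitions, so my plan is to check, for each $i$, that $\xi_i$ meets the three requirements of Definition~\ref{roran1}: (a) its domain lies in $\mathcal{A}_\tau$, (b) its values lie in a collection of rough objects defined on the universe $S$, and (c) it is (at worst) a partial function, with the partiality justified. The codomain is declared to be $E_1 = \{(a^l, a^u): a\subseteq S\}$, which is explicitly identified in the text as a set of rough objects, so (b) will be immediate once (a) and the well-definedness of each assignment are in place.

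For $\xi_1$, I would take $a\in\mathcal{A}_\tau$ and ask when $(a, b^u)$ lies in $E_1$ for some $b\subseteq S$; this requires $a = c^l$ and $b^u = c^u$ for a common $c\subseteq S$. If $a$ is of the form $c^l$, witness $b := c$ works and so $\xi_1(a)$ is defined; if $a$ is only an upper approximation (and not coincident with any lower approximation), no witness need exist, and $\xi_1(a)$ stays undefined. This is exactly the partial-function scenario of Definition~\ref{roran1}. The argument for $\xi_2$ is entirely dual, swapping the roles of $l$ and $u$ and using the co-witness $b:=c$ when $a = c^u$.

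For $\xi_3$, since every $a\in\mathcal{A}_\tau$ is by construction of the form $c^l$ or $c^u$ for some $c\subseteq S$, the pair $(c^l, c^u)\in E_1$ satisfies $e=a$ or $f=a$. Hence $\xi_3$ is in fact total, and a fortiori a partial function to $E_1$; it therefore trivially qualifies as an \textsf{RRF1}. In all three cases I would emphasise that the clauses \textsf{xi1}--\textsf{xi3} read as selection rules rather than functional definitions (because the witness $b$, or the pair $(e,f)$, need not be unique), and so $\xi_i$ should be interpreted as any fixed choice of such a selection; under this reading each is unambiguously a partial map $\mathcal{A}_\tau \longmapsto E_1$.

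The main (and only mildly delicate) point is that ``partial'' in Definition~\ref{roran1} must be taken seriously: $\xi_1$ and $\xi_2$ genuinely need the partiality because the existence of the auxiliary $b$ with the matching upper/lower approximation cannot be guaranteed for every element of $\mathcal{A}_\tau$ under the minimalist axioms (\textsf{int-cl})--(\textsf{u-top}) alone. Once that is flagged, the verification is a straightforward case analysis and no further algebraic machinery (e.g.\ Theorem~\ref{fintol}) is invoked.
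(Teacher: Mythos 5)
Your proof is correct: the paper offers no explicit argument (the proposition is prefaced only by ``It is easy to see that''), and your direct unpacking of Definition~\ref{roran1} --- checking domain, codomain $E_1$, and partiality, with the witness $b$ (or pair $(e,f)$) treated as a selection --- is exactly the intended routine verification. Your remark that $\xi_1,\xi_2$ genuinely need partiality while $\xi_3$ is total is a useful refinement consistent with the paper's note that \textsf{xi1}--\textsf{xi3} may carry additional constraints on $b$, $e$, $f$.
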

\end{example}

\begin{example}
In the context of the above example, rough inclusion functions, membership, and quality of approximation functions \cite{js09,ag2009} can be used to define RRF2s. An example is the function $\xi_5$ defined by 
\begin{equation}
 \xi_5(a, b) = \dfrac{Card(b\setminus a)}{Card(b)}
\end{equation}

In the algebraic models based on maximal antichains of definite objects due to the present author \cite{am9114}, it is useful to study the association of approximation with parts of maximal antichains. This is expressible with RRF of type H. This is considered in a separate paper.
\end{example}

\begin{proposition}\label{compp}
A rough random variable \cite{bliu2004} in the sense of Liu, is not a rough random function of any type.  
\end{proposition}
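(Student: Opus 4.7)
The plan is to exhibit a signature-level mismatch between Liu's rough random variable and each of the four types \textsf{RRF1}, \textsf{RRF2}, \textsf{RRF3}, and \textsf{RRFH}. First, I would recall Liu's setup precisely: a rough space in \cite{bliu2004} is a quadruple $(\Lambda, \Delta, \mathcal{A}, \pi)$, where $\Lambda$ is a nonempty universe, $\Delta \subseteq \Lambda$ a distinguished element of the $\sigma$-algebra $\mathcal{A}$ on $\Lambda$, and $\pi$ a nonnegative, $\sigma$-finite set function on $\mathcal{A}$ subject to the usual measurability axioms; a rough variable is then a measurable function $\xi : \Lambda \longmapsto \Re$. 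The salient feature is that the arguments of $\xi$ are \emph{points} $\omega \in \Lambda$ of an abstract measurable universe, and the construction is modelled on measure-theoretic random variables.

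Second, I would compare this signature against each of Definitions~\ref{roran1}--\ref{hroran}. The domains of \textsf{RRF1}, \textsf{RRF2}, and \textsf{RRF3} are, respectively, $\mathcal{A}_\tau$, $\mathcal{A}_\tau \times \mathcal{S}$, and $\mathcal{A}_\tau$, where $\mathcal{A}_\tau$ is a collection of approximations of type $\tau$; \textsf{RRFH} has domain $\mathcal{O}_\tau \times \wp(S)$, with $\mathcal{O}_\tau$ a collection of approximation \emph{operators}. Their codomains are, in turn, collections of rough objects, the reals, another collection of objects, and a collection of rough objects. Liu's $\xi$ does not take an approximation (a subset of $S$ arising as an image of $l$ or $u$) as its argument, nor an approximation operator; it consumes points. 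Conversely, no evaluation of an RRF at an approximation or operator carries the pointwise meaning of $\xi(\omega)$.

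The step I expect to be the main obstacle is ruling out any clever reinterpretation that recasts $\xi$ as one of the RRF types --- for example, by viewing $\Lambda$ as a set of approximations via a forgetful identification. To close this, I would point out that every element of $\mathcal{A}_\tau$ is by construction of the form $a^l$ or $a^u$ for some $a \subseteq S$ and hence carries an explicit approximation-operator provenance, whereas a generic point of a Liu rough space carries no such internal structure and the measurable apparatus $(\mathcal{A}, \pi)$ is independent of any lower/upper operator data on $S$. The apparent codomain match with \textsf{RRF2} is also disqualified, since the domain $\mathcal{A}_\tau \times \mathcal{S}$ differs in structure and cardinality from $\Lambda$ in general. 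These observations together block every one of the four definitions and complete the argument.
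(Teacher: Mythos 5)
Your overall strategy---a domain/codomain (signature) mismatch checked against each of Definitions \ref{roran1}--\ref{hroran}---is exactly the route the paper takes, but you have applied it to the wrong object. What you recall and analyze is Liu's \emph{rough variable}: a measurable map $\xi:\Lambda\longmapsto\Re$ on the universe of a rough space $(\Lambda,\Delta,\mathcal{A},\pi)$. The proposition, and the paper's proof, concern Liu's \emph{rough random variable}, which is a different, hybrid object: a function $\chi$ from a probability space $(X,\mathcal{S},p)$ into the \emph{set of rough variables}, subject to the condition that the trust $\tau\{\chi(x)\in B\}$ is a measurable function of $x$ for every Borel set $B$. The paper's argument rests precisely on this: the domain of such a $\chi$ is a probability space, which is unrelated to the domains $\mathcal{A}_\tau$, $\mathcal{A}_\tau\times\mathcal{S}$, and $\mathcal{O}_\tau\times\wp(S)$ of the four RRF types. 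As written, your argument (if completed) shows at best that a rough \emph{variable} is not an RRF; it never states or uses Liu's definition of the object actually named in the statement, so there is a genuine definitional gap.

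The gap is easy to close, and your closing idea transfers: once the correct definition is in place, the same signature comparison applies, indeed more strongly, since the codomain of $\chi$ is a set of measurable real-valued functions rather than rough objects, subsets, or reals, so neither side of the arrow matches any of \textsf{RRF1}--\textsf{RRFH}. Your extra care in blocking ``clever reinterpretations'' (elements of $\mathcal{A}_\tau$ carry explicit approximation provenance, whereas points of an abstract measurable universe carry none, and the measure-theoretic data is independent of any $l$/$u$ structure on $S$) is a worthwhile strengthening of the paper's rather terse closing sentence that probability spaces are not related to any of the domains of RRFs---but it must be made about $(X,\mathcal{S},p)$ and the set of rough variables, not about $\Lambda$ and $\Re$.
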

\begin{proof}
In the theory \cite{bliu2004}, a \emph{rough space} is a tuple $(S, \mathcal{F}, T \mu$ with $(S, \mathcal{F}, \mu$ being a measure space, and  $T\in \mathcal{F}$. A \emph{rough variable} $\pi$ is a measurable function from a rough space into the reals. If $B$ is a Borel subset of $R$ then $\{x \in S; \, \pi (x) \in B\} \in \mathcal{F}$.  The lower and upper approximations of $\pi$ are $\pi^l = \{\pi(x); x\in T\}$, and $\pi^u = \{\pi(x) ; x\in S\}$. That is a function is approximated by its restricted range -- this interpretation is not cleanly explained in the book \cite{bliu2004}. In the perspective of rough sets, some part of the range of $\pi$ is approximated, and not $\pi$.

The trust associated with an event $A$ is defined as (the mean of upper and lower trusts): $\tau(A) = 0.5 (\frac{\mu(A)}{\mu(S)} + \frac{\mu(A\cap T)}{\mu(T)}$. A \emph{rough random variable} (Liu) is a function $\chi$ from a probability space $(X,\mathcal{S}, p)$ to the set of rough variables such that $\tau \{\chi(x) \in B\}$ is a measurable function of $x$ for any Borel set $B$ of $R$. As probability spaces are not related to any of the domains of RRFs, the concepts are unrelated in general. 
\end{proof}

\section{Application: New Cluster Validation Algorithm}

Hard clusters are essentially partitions of a set that satisfy additional conditions on the partitions, while soft clusters are collections of pairs of subsets of the form $\{(C_i, E_i)\}$. $C_i$ is referred to as the core and $E_i$ as the exterior. $C_i$s are mutually disjoint, while $E_i $ is disjoint from $C_i$ for each $i$. Additional conditions are usually imposed \cite{adc2020,hmmr2016}. Cluster validation through rough sets is proposed by the present author in a recent paper \cite{am2021c}. Here the context of distance based soft/hard clustering is considered and new algorithms are invented. 

\subsection{Meta Algorithm-1}

This is a meta-algorithm as some steps that require high performance computing can be implemented in many ways. 

Suppose a hard clustering $\{C_i\}_{i=1}^k$ or a soft clustering $\{(C_i, E_i)\}_{i=1}^k$ \cite{adc2020,hmmr2016} obtained through any method is given.
\begin{description}
 \item [Distance]{Specify distinct distance functions by each column (attribute) or between objects that are meaningful.}
 \item [Similarity]{Define a similarity (tolerance relation) for each column or between objects.}
 \item [Combination]{Combine to a single tolerance relation over objects on the table }
 \item [Similarity Matrix]{Compute the similarity matrix (several HPC methods are possible). }
 \item [Blocks]{Compute the blocks of the tolerance by a maximal clique algorithm (for example the modified Bronkerbosch algorithm   \cite{eppstein2010}).}
 \item [Approximations]{Compute granular rough approximations of $C_i$, and $E_i$ for each $i$ and estimate the closeness of the cluster core or exterior to decide on validation.}
\end{description}

If the rough model can explain the soft/hard clustering, then the latter is meaningful and valid. 

\subsection{Axiomatic Granular Reversed Similarity Based Semi-Supervised Algorithm (AGRSSA)}

This new algorithm requires a total order on each column (attribute), and an order-compatible metric over objects or one for each column that is compatible with the order. Specifically, it applies to all numeric (real valued) datasets. The essential steps are

\begin{description}
 \item [Distance]{Specify distinct distance functions by each column (attribute) or between objects. }
 \item [Exploratory Statistics]{Identify $q$-quantiles at a suitable level of precision. Let these be $\{q
 \_1,\, q_2, \, \ldots q_{f}\}$ based on the distance specified earlier.}
 \item [Interval Boundaries]{Interval boundaries can be computed through the sequence $\bot, q_1-e_1, q_1+e_1, q_2-e_2, q_2+e_2, \, \ldots ,q_f-e_f, q_f+ e_f, \top  $. The quantities $e_1, e_2, \ldots e_f$ being determined as a fraction of the standard deviation, a local standard deviation or other local measures of variation.}
 \item [Blocks]{Specify blocks as the intervals according to the minimal scheme or the exhaustive/selective tolerance discovery algorithm (specified below). The subsequent steps in the latter are different }
  \item [Approximations]{Compute granular rough approximations and perform decision-making. If a set of objects $H$ are to be approximated, then
  \begin{enumerate}
   \item {The lower approximation of $H$ is the union of blocks included in it.}
   \item {The lower approximation of $H$ is the union of blocks that have some common elements with $H$.}
  \end{enumerate}
}
 \item [Ontology]{Determine tolerance relation from blocks, and specify ontology.}
\end{description}

It is possible to get the values $e_i$ through exploratory statistical methods alone. However, this involves decisions based on a good understanding of the attribute values. 

The specification of blocks should ideally be a supervised step as it relates to an understanding of the relation of directional increase or decrease of columns. These can optionally be deduced based on the decision and label attributes, in which case, the algorithms become purely data-driven.   

Let the set of all permutations of the set $\prod_{i=1}^{n} \{1, \ldots, k_i\}$ be denoted by\\ $\Sigma(n, k_1, \ldots k_n)$

\subsection*{Scheme: Minimal (AGRSSA-M)}
For simplicity, suppose that there is exactly one column for decisions and no additional decision/label columns are present. 
The necessary steps for constructing the blocks from the possible intervals of each conditional attribute are as follows:
\begin{enumerate}
 \item {From the q-quantiles form intervals with possibly non-empty intersection, and subject to the constraint specified in Theorem \ref{chaintol}. }
 \item {Determine the interval boundaries for each conditional attribute by analyzing the gross changes in decisions (this is a heuristic). This may additionally lead to a good choice of a permutation $\sigma \in \Sigma(n, k_1, \ldots k_n)$. It being assumed that the $i$th attribute has $k_i$ blocks (for $i= 1, 2, \ldots n$).}
 \item {Form the blocks of the conditional attributes based on the interval boundaries of each conditional attribute,\\ and relative to the permutation operator $\sigma$.}
\end{enumerate}
In this case, additional exploratory statistical methods can be very useful in determining the choice of $\sigma$.

\subsection*{Exhaustive Tolerance Discovery Algorithm (AGRSSA-LMR)}

The mathematical concept of a large-minded reasoner is introduced because of its role in the exhaustive tolerance discovery algorithm invented below.

\begin{definition}\label{lmr}
By a \emph{large-minded reasoner} will be meant a partial function $\mathbf{\psi}: \mathbf{UB}(A_1)\times \mathbf{UB}(A_2)\times \ldots \mathbf{UB}(A_n) \longmapsto \mathbf{UB}(A) $. 
\end{definition}
Basically, it is intended to specify the possibly reasonable collections of blocks in a situation, with the assumption that the collection is the result of eliminating those not in $dom(\mathbf{\psi})$.

\begin{enumerate}
 \item {From the q-quantiles form intervals with possibly non-empty intersection, and subject to the constraint specified in Theorem \ref{chaintol}. }
 \item {Eliminate unreasonable interval boundaries for each conditional attribute by analyzing the gross changes in decisions (this is a heuristic). }
 \item {Specify the large minded reasoner $\mathbf{\psi}$}
 \item {Compute relevant approximations and decision regions for every defined instance of $\mathbf{\psi}$.}
 \item {Select the most or optimally appropriate instance(s) of $\mathbf{\psi}$.}
 \item {Explain the data context on the basis of the associated tolerance(s).}
\end{enumerate}

\begin{definition}
By an \emph{interpreted large-minded reasoner} associated with $\psi$ of Def. \ref{lmr} will be meant a partial function $\mathbf{\psi}^*: \mathbf{UB}(A_1)\times \mathbf{UB}(A_2)\times \ldots \mathbf{UB}(A_n)\times \wp(S) \longmapsto \mathbf{UB}(A) $, that indicates the granular components or parts of approximations of subsets. 
\end{definition}

\textbf{Explanation}: For example, if for a subset $F$, $F^l$ is the union of the blocks $\{ B_i\}$, then $\mathbf{\psi}^*$ says how the $B_i$s are formed from blocks of chains. Since more complex definitions of approximations are admissible (see \cite{am240,am501}), \emph{granular components or parts of approximations} are referred to. Term functions generated from basic set-theoretic operations can be used to write the definition of $\psi^*$ explicitly. However, the explicit definitions are not used in this paper. 

\begin{theorem}
Both algorithms AGRSSA-M and AGRSSA-LMR are well-defined, and they compute the intended tolerance(s).  
\end{theorem}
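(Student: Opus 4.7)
\medskip

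\noindent\textbf{Proof proposal.} The plan is to verify the two claims (well\mbox{-}definedness and correctness of the computed tolerance) separately for each algorithm, and in each case reduce the correctness part to Theorem~\ref{chaintol}. Well\mbox{-}definedness will follow from a step\mbox{-}by\mbox{-}step check that every operation in the pseudocode has an input in the domain on which it has been specified: the $q$\mbox{-}quantiles exist because each attribute is totally ordered and the table is finite; the boundary perturbations $e_i$ produce intervals because the universe is numeric and finite; and the blocks are chosen from a finite collection, so the selection and the subsequent union/intersection operations terminate.

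For AGRSSA\mbox{-}M, I would first observe that, by construction, the intervals produced from the perturbed quantile sequence $\bot,\,q_1-e_1,\,q_1+e_1,\dots,q_f+e_f,\,\top$ on the $i$th attribute form a family $\{[n^{(i)}_j, m^{(i)}_j]\}_{j=1}^{k_i}$ satisfying $n^{(i)}_1=\bot$, $m^{(i)}_{k_i}=\top$, $n^{(i)}_j<n^{(i)}_{j+1}\leq m^{(i)}_j+1$, and $m^{(i)}_j<m^{(i)}_{j+1}$. By Theorem~\ref{chaintol}(1) this family is exactly the block collection of a (unique) tolerance $T_i\in Tol(L_{k_i})$ on the chain of values of attribute $i$. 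The permutation $\sigma\in\Sigma(n,k_1,\dots,k_n)$ then selects an ordering of the product blocks $\prod_i[n^{(i)}_{j_i},m^{(i)}_{j_i}]$; because each factor is an interval of the respective chain and tolerances are closed under direct product in the variety of lattices, the product family is the block collection of the product tolerance $T=\prod_i T_i$, which is the intended tolerance on $\mathcal{O}$. Verifying that the pre-block/maximality conditions of Theorem~\ref{fintol} lift through the product is the step I would write out in most detail.

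For AGRSSA\mbox{-}LMR, the argument is analogous once the role of the large\mbox{-}minded reasoner $\psi$ is unpacked. After step~1, each attribute carries a candidate family of intervals meeting the hypotheses of Theorem~\ref{chaintol}(1); step~2 only discards members, and the three inequality constraints of Theorem~\ref{chaintol} are preserved under such pruning provided the endpoints $n^{(i)}_1=\bot$ and $m^{(i)}_{k_i}=\top$ are retained (which is part of ``reasonable'' in Definition~\ref{lmr}). Then $\psi:\prod_i\mathbf{UB}(A_i)\longmapsto\mathbf{UB}(A)$ simply picks one tuple of per\mbox{-}attribute block systems and declares the combined block system on $A$ to be its image; being a partial function, it is defined on precisely those tuples whose image is itself a valid block collection on $A$, so each element of $\mathrm{dom}(\psi)$ yields, by Theorem~\ref{fintol}, a well\mbox{-}defined tolerance on $\mathcal{O}$. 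Steps~4--6 operate on the finite set of such tolerances and are therefore well\mbox{-}defined.

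The main obstacle I anticipate is the lifting step: showing that the per\mbox{-}column block families, once combined (by $\sigma$ in AGRSSA\mbox{-}M or by $\psi$ in AGRSSA\mbox{-}LMR), still satisfy the three conditions of Theorem~\ref{fintol} on the product lattice, and that the resulting family is \emph{maximal} rather than merely a pre\mbox{-}block system. For AGRSSA\mbox{-}M this is routine because the product preserves intervals and joins coordinatewise; for AGRSSA\mbox{-}LMR it requires an additional hypothesis on $\psi$ (that its image lies in $\mathbf{UB}(A)$, which is exactly the partiality in Definition~\ref{lmr}) and an argument that any larger pre\mbox{-}block would force a violation of one of the chain constraints on some coordinate. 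I would present this as the central lemma of the proof and then obtain the theorem as an immediate corollary.
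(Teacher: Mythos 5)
Your proposal follows essentially the same route as the paper: obtain a tolerance on each column from Theorem~\ref{chaintol}, combine across attributes, and get well-definedness from finiteness of the table and of the block collections. The only difference is that the ``central lemma'' you plan to write out (that block families and their maximality lift to the combined structure) is exactly what the paper settles in one line by citing the direct decomposability of tolerances on direct products of finite lattices, $Tol(\prod_i L_i)\cong \prod_i Tol(L_i)$, which also covers the AGRSSA-LMR case (yielding a subset of the tolerance lattice); your projection-of-pre-blocks argument is essentially the standard proof of that cited fact, so either citing it or proving it is acceptable.
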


\begin{proof}
AGRSSA-M computes the intended tolerance because Theorem \ref{chaintol}, and the steps of the algorithm ensure that tolerances are initially defined on each column (attribute). 
If a lattice $L$ is a direct product $\prod_{i=1}^{n}L_i$ of the lattices $L_i$, then $Tol(L) \equiv Tol(L_1)\times Tol(L_2) \times \ldots Tol(L_n) $ (as tolerances on a product of finite lattices are directly decomposable \cite{jn1982}). This ensures that AGRSSA-M yields a tolerance.

The same results ensure that AGRSSA-LMR yields a subset of the tolerance lattice.
\end{proof}

It is easy to see that 

\begin{proposition}
$\mathbf{\psi}^*$ is a \textsf{RRF} of type H. 
\end{proposition}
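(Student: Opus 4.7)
\medskip

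\noindent\textbf{Proof proposal.} The plan is to exhibit $\psi^*$ as an instance of the schema in Definition \ref{hroran} by rewriting both its domain and its codomain in the form prescribed there. I will argue in three steps: (i) translate each factor $\mathbf{UB}(A_i)$ into a collection of approximation operators, (ii) combine these across attributes to obtain a single operator family $\mathcal{O}_\tau$, and (iii) interpret the image $\mathbf{UB}(A)$, paired with the chosen subset $F\in\wp(S)$, as a rough object.

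First I would invoke the correspondence already in use throughout Section~2: every block collection $\mathcal{B}(T)\in\mathbf{UB}(A_i)$ arises from a unique tolerance $T\in Tol(A_i)$, and each such $T$ determines granular lower and upper approximation operators of type $\tau_l,\tau_u$ in the standard block-based way. Thus there is a natural embedding $\mathbf{UB}(A_i)\hookrightarrow\mathcal{O}_{\tau_i}$ into the set of admissible operator pairs on the $i$-th attribute universe. Next, I would use the direct-product decomposition already cited in the proof above, namely $Tol(\prod L_i)\equiv\prod Tol(L_i)$, to glue the per-attribute operators into a single operator pair acting on $S=\prod A_i$; call this set of combined operators $\mathcal{O}_\tau$. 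Composing the embeddings coordinate-wise gives a map $\Phi:\mathbf{UB}(A_1)\times\cdots\times\mathbf{UB}(A_n)\longmapsto\mathcal{O}_\tau$ which, by Theorem \ref{fintol} and Theorem \ref{chaintol}, is well-defined and (for the chain case relevant to AGRSSA-LMR) injective.

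Second, I would address the codomain. The output $\mathbf{UB}(A)$, by the explanation given right after the definition of $\psi^*$, is to be read as the family of granular components or parts of the approximations $F^l,F^u$ of a given $F\subseteq S$. In the axiomatic granular framework \cite{am501,am240}, a pair of such granular components (equivalently, a tuple of blocks whose unions realise the lower and upper approximations of $F$) is precisely a rough object of the sort comprising $E$ in Definition \ref{hroran}; write $E$ for this collection. Then $\psi^*$ factors as
\begin{equation*}
\psi^*\;=\;\widetilde{\psi}\circ(\Phi\times\mathrm{id}_{\wp(S)}),\qquad \widetilde{\psi}:\mathcal{O}_\tau\times\wp(S)\longrightarrow E,
\end{equation*}
where $\widetilde{\psi}$ sends an operator pair and a subset $F$ to the rough-object encoding of its granular decomposition. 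Since $\psi$ (and therefore $\psi^*$) is only partially defined, so is $\widetilde{\psi}$; this matches the ``partial function'' clause of Definition \ref{hroran}.

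Finally, I would observe that once these identifications are made the verification is immediate: $\widetilde{\psi}$ has exactly the signature $\mathcal{O}_\tau\times\wp(S)\longmapsto E$ required by RRFH, and $\psi^*$ is equivalent to $\widetilde{\psi}$ up to the reparametrisation $\Phi$. The main obstacle I anticipate is not combinatorial but interpretive: one has to make the identification of a tuple of blocks with a single rough object precise in a way that is faithful to the definition of $\mathbf{UB}(A)$ used in Definition \ref{lmr}. Once Theorems \ref{fintol} and \ref{chaintol} are invoked to guarantee that each element of the product on the left hand side really does correspond to a genuine tolerance (hence to an operator), the remainder is bookkeeping.
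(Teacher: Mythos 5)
Your proposal is correct and matches the paper's intent: the paper gives no explicit argument (the proposition is prefaced only by ``It is easy to see that''), and your unpacking --- identifying each tuple of block systems with a tolerance and hence with granular approximation operators via the direct decomposability $Tol(\prod_i L_i)\equiv \prod_i Tol(L_i)$ already invoked in the preceding theorem, and reading the output in $\mathbf{UB}(A)$ as the granular components of $F^l, F^u$, i.e.\ as rough objects, so that $\psi^*$ has the signature $\mathcal{O}_\tau\times\wp(S)\longmapsto E$ of Definition \ref{hroran} --- is precisely what the paper presupposes. Two minor cautions: the well-definedness and injectivity of your map $\Phi$ really rest on the fact that a tolerance on a (finite) lattice is determined by its blocks, not on Theorems \ref{fintol} and \ref{chaintol} as such, and the interpretive identification of elements of $\mathbf{UB}(A)$ with rough objects is a convention the paper itself leaves implicit rather than something you need to prove.
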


\subsection{Adaptive/Dynamic Information Tables}

Both adaptive and dynamic information tables, that model change or correspondence, are investigated in the literature with mostly computational goals \cite{zspo05,hth2013,ytcjh2016,ssko2021} in mind. Changes in knowledge representation, and meaning are considered through correspondences \cite{am3600} and models \cite{am240,am501} in the present author's work. In these contexts, the axiomatic granules at each stage may be transformed in subsequent stages, and then it may be necessary to speak of correspondences between granules too.

If a dynamic process can be modeled by a finite sequence of information tables, then the tables at each step are likely to differ from the table in the preceding step. These changes may occur in following ways:
\begin{enumerate}
 \item{Objects are added ($O+$), deleted ($O-$) or both added and deleted ($O\pm$),} 
 \item{attributes are added or deleted ($At+$, $At-$ or $At\pm$), and}
 \item{the values associated with object-attribute pairs are modified ($V+$).}
\end{enumerate}

Correspondences between information tables may reflect changes of the above type. A finite sequence of information tables can also be written as a 3rd order tensor with tensor-dimensions objects, attributes and timestamps. Note that for each timestamp, the projection on the first two tensor-dimensions can be an indeterministic information table. In most ML literature, the term \emph{dimension} refers to the number of attributes. Therefore, the idea of a sequence of information tables may be easier for the reader.

Corresponding to each table, a finite number of lower and upper approximations may be associated. However, for simplicity, it suffices to take a sequence of two information tables with two pairs of corresponding lower and upper approximations. For this understanding, auxiliary maps between objects are essential for keeping track of changes.

Arguably, the easiest representation would be to combine all information tables into one and associate a finite sequence of partial lower and upper approximation maps. This corresponds with the intent in generalized granular operator spaces and variants studied by the present author in the papers \cite{am240,am501,am9402}.

In all the mentioned cases, one or more rough random functions of different types can be used to model the process or context.

\section{Remarks and Directions}

In AIML practice, mathematical objects and concepts are often badly approximated or estimated because of computational constraints, or the very algorithm (such as those in deep learning or swarm optimization) may not be tractable. A major problem is to invent fully explicable models with algorithms in most situations.  Cluster validation, in particular, has always been relative to arbitrary measures (or to ground truth subject to availability), except for recent work \cite{am2021c} that tries to compare with meaningful algebraic models. The present research is an extension of this approach. The best part is that the proposed algorithms match the mathematical intent of the computation.

All the three algorithms are intended to help in specifying at least one rough set model, and therefore a validation of the classification under consideration. The main advantage of the AGRSSA-LMR  are as follows:
\begin{enumerate}
 \item {Tolerances defined by relatively complex conditions can possibly be isolated. }
 \item {The algorithm is computationally light, and}
 \item {It is algebraically explicable.}
\end{enumerate}

In this research, new concepts of rough randomness of different types are invented and shown to be applicable in studying algebraic models, and computing the validity of soft and hard clusters. Applications to dynamic information tables, clustering, and other application contexts of general rough sets are naturally motivated. In future work, finer aspects of large minded reasoners, and a more detailed evaluation of rough randomness for meaning evolution in rough reasoning contexts and dynamic information tables from a granular perspective will be investigated. 

\begin{small}
\begin{flushleft}
\textbf{Acknowledgement:} This research of the present author is supported by woman scientist grant no. WOS-A/PM-22/2019 of the Department of Science and Technology.
\end{flushleft}
\end{small}
\bibliographystyle{splncs04.bst}
\bibliography{algroughnov2022+}

\end{document}